\def\ArxivVersion{1}
\documentclass[journal]{IEEEtran}
\pdftrailerid{}
\usepackage{amsmath}
\usepackage{amssymb}
\usepackage{amsthm}
\usepackage{algorithm}
\usepackage{algpseudocode}
\usepackage{array}
\usepackage{booktabs}
\usepackage{flushend}
\usepackage[nocompress]{cite}
\usepackage{graphicx}
\usepackage{multirow}
\usepackage{needspace}
\usepackage{placeins}
\usepackage{tikz}
\usetikzlibrary{arrows.meta,calc,fit,positioning}
\usepackage{pgfplots}
\usepgfplotslibrary{fillbetween,groupplots}
\pgfplotsset{compat=1.18}
\usepackage{url}

\newtheorem{proposition}{Proposition}

\graphicspath{{figures/selected/}{figures/generated/}}

\newcommand{\JournalTitle}{Latency-Aware Client Assignment for Parallel Split Learning With Global Sampling}

\newcommand{\ManuscriptAuthors}{Mohammad~Kohankhaki,
        Valentin~Rentschler,
        and~Anke~Schmeink}
\newcommand{\AuthorsPlain}{Mohammad Kohankhaki, Valentin Rentschler, Anke Schmeink}
\newcommand{\AuthorInformation}{%
\thanks{The authors are with the Chair of Information Theory and Data Analytics,
RWTH Aachen University, Kopernikusstr. 16, 52074 Aachen, Germany.}%
\thanks{Corresponding author: Mohammad Kohankhaki (e-mail:
mohammad.kohankhaki@inda.rwth-aachen.de).}%
\thanks{Mohammad Kohankhaki and Anke Schmeink are also affiliated with the Cluster of Excellence CARE,
TU Dresden and RWTH Aachen, Germany.}}

\ifdefined\ArxivVersion
  \let\CurrentTitle\JournalTitle
\else
  \let\CurrentTitle\JournalTitle
\fi
\title{\CurrentTitle}
\ifdefined\AnonymousVersion
  \author{Anonymous Authors}
\else
  \author{\ManuscriptAuthors\AuthorInformation\thanks{This work has been submitted to the IEEE for possible publication. Copyright may be transferred without notice, after which this version may no longer be accessible.}}
\fi

\begin{document}

\maketitle

\begin{abstract}
In cross-silo split learning, Parallel Split Learning with Global Sampling forms
representative pooled batches when class distributions differ across clients,
but ignores client delay when several clients can supply the same class. We introduce Latency Budgeted Parallel Split Learning
with Global Sampling, which separates each pooled batch's integer class target
from the choice of clients that supply its examples. The flow variant formulates this
assignment as an integral network-flow problem and minimizes modeled client-side
completion time for the current target. The fast variant uses a greedy
next-completion rule to reduce schedule-construction cost. Both preserve the
target stream and use every local example once per epoch. A planning rule
selects between the variants while accounting for the cost of constructing both
candidate schedules. On CIFAR-10, the flow variant reduces modeled training
time by 6.75\%, with a 0.30 percentage-point decrease in final accuracy. On
Tiny ImageNet with 20 candidate classes per client, the fast variant reduces
modeled time by 16.87\% and reaches all four validation targets earlier than
the latency-unaware baseline. Across 405 schedule comparisons, the planning
rule stays within 2\% of the lower realized cost in 96.54\% of cases. In our
evaluation, latency-aware provider assignment reduces modeled training time
without changing the prescribed class targets, while the preferred variant
depends on whether assignment savings outweigh schedule-construction overhead.

\end{abstract}

\begin{IEEEkeywords}
Global sampling, latency-aware client assignment, non-IID data,
parallel split learning, scheduling.
\end{IEEEkeywords}

\section{Introduction}
\label{sec:introduction}

Cross-silo learning trains a shared model across organizations with different
class distributions and computing resources while keeping raw data local. Split
learning places part of the model on a server to reduce client-side
computation~\cite{vepakommaSplitLearning2018}, and parallel split learning (PSL)
executes the client models concurrently for a shared server
update~\cite{jeonPSL2020}. Its synchronous steps couple two questions: which
examples should form a representative global batch, and which clients can supply
them efficiently?

Parallel Split Learning with Global Sampling (GPSL) addresses the first
question~\cite{kohankhakiGPSL2025} by sampling across the pooled client
population, which allows variable local contributions and a global batch size
independent of the number of clients. It therefore forms representative global
batches even when client data are non-IID. However, GPSL
is latency-unaware: when several clients hold the same class, its selection does
not exploit differences in predicted completion time.

The same class target can be assigned to clients in different ways, with
different completion times (Fig.~\ref{fig:problem_separation}). This motivates our central question:
can latency-aware assignment reduce training cost while preserving a
representative class target and the full epoch sample set?

\begin{figure}[!t]
  \centering
  \includegraphics[width=0.98\columnwidth]{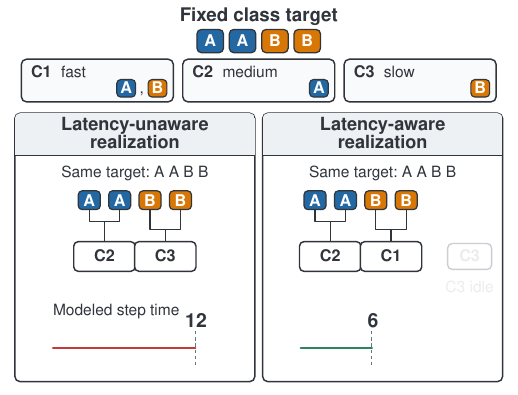}
  \caption{Alternative assignments for the same class target A--A--B--B.
  The latency-unaware assignment uses slow client C3 and takes 12 modeled time
  units. LB-GPSL assigns the same targets to eligible clients C1 and C2 and
  takes 6: the class target is unchanged, only the providers differ.}
  \label{fig:problem_separation}
\end{figure}

We introduce Latency Budgeted Parallel Split Learning with Global Sampling
(LB-GPSL). At each scheduled step, the remaining pooled class counts define an
integer target. Calibrated client delays then guide the assignment of that
target to clients with available examples. A fresh randomized schedule is
constructed before each epoch and leaves the PSL update rule unchanged.
Fig.~\ref{fig:method_overview} summarizes this construction.

Flow LB-GPSL solves an integral flow problem and minimizes modeled
client-side time for the current target and remaining support, at a higher
schedule-construction cost. Fast LB-GPSL uses the earliest predicted next
completion to assign each requested example, reducing construction cost. A
planning rule chooses between them using predicted epoch cost, including
schedule construction.

Our work extends GPSL from latency-unaware global sampling to latency-aware
realization of an explicit class target. The contributions are:
\begin{itemize}
  \item A support-constrained assignment formulation that preserves the
  integer target stream and uses every local example once per epoch. Flow
  LB-GPSL minimizes modeled client-side time for the current target and state,
  and Fast LB-GPSL reaches the same optimum under a sufficient common-support
  condition.
  \item A planning rule that chooses between Flow and Fast LB-GPSL using
  predicted schedule cost, including construction overhead. It requires no
  learning-curve information and stays within 2\% of the lower-cost algorithm
  in 96.54\% of 405 schedule comparisons.
  \item Paired training evidence on CIFAR-10, CIFAR-100, and Tiny ImageNet.
  Flow LB-GPSL reduces modeled training time by 6.75\% on CIFAR-10. Fast
  LB-GPSL reduces it by 16.87\% on Tiny ImageNet with 20 candidate classes per
  client and reaches all four validation targets earlier.
\end{itemize}

We report modeled training time, final accuracy, and time to validation
targets because reassignment can change learning trajectories. On CIFAR-10 and
CIFAR-100, lower modeled training time is accompanied by lower mean final
accuracy.

The remainder of the paper is organized as follows. Section~II reviews related
work. Sections~III--V present the system model, LB-GPSL, and its analysis.
Sections~VI and VII describe the experimental setup and results, followed by
discussion and conclusion in Sections~VIII and IX.

\section{Related Work}
\label{sec:related_work}

\subsection{Split Learning and Global Sampling}

SplitFed combines split learning with federated aggregation of client
models~\cite{thapaSplitFed2022}. Its convergence under heterogeneous data and
dual-paced updates has been studied~\cite{hanSFLConvergence2024}. A review
compares model split, model aggregation, and intermediate aggregation
designs~\cite{huSplitReview2025}.

Several split-learning methods change the learning or aggregation rule. SGLR,
LocFedMix-SL, SMixSL, differentially private split learning, MU-SplitFed, and
class-aware or gradient-aligned variants modify aggregation, representations,
update counts, or learning rules, including privacy--accuracy trade-offs
~\cite{palSGLR2021,ohLocFedMix2022,tinhSMixSL2026,
phamDPSplit2025,liangMUSplitFed2025,xieClassImbalanceSFL2025,linGAPSL2026}.
Other systems focus on execution. EPSL, PipeSFL, HASFL, and SuperSFL adapt
aggregation, model partitioning, parallelism, or client capacity
~\cite{linEPSL2024,gaoPipeSFL2025,linHASFL2026,asifSuperSFL2026}. Wireless and
hybrid systems optimize clustering, spectrum, protocols, helpers, or
schedules~\cite{wuWirelessPSL2023,tiranaHFSL2025}. Our assignment problem keeps
the PSL update, cut, client capacity, and pooled class target fixed.

GPSL~\cite{kohankhakiGPSL2025} forms representative pooled batches from non-IID
data with a global batch size independent of client count. It chooses class
composition and providers together without delay information. LB-GPSL fixes
the class target before assigning its examples to eligible clients under their
remaining support, while still using every local example during the epoch.

\subsection{Heterogeneity, Selection, and Load Balancing}

FedNova shows how unequal update counts alter the effective federated-learning
objective~\cite{wangFedNova2020}. Oort, TiFL, and FedBalancer control
participation, tiers, samples, or pace~\cite{laiOort2021,chaiTiFL2020,shinFedBalancer2022}.
HeteroFL adapts client capacity~\cite{diaoHeteroFL2021}, and NetPlacer+
balances partition placement~\cite{gaoNetPlacer2025}. LB-GPSL instead assigns
a fixed class target among eligible clients while preserving the complete epoch
sample set.

\section{System Model and Problem Formulation}
\label{sec:system_model}

We consider synchronous cross-silo PSL for supervised classification
~\cite{vepakommaSplitLearning2018,jeonPSL2020}. A server coordinates clients
\(\mathcal K=\{1,\ldots,K\}\), and \(\mathcal M=\{1,\ldots,M\}\) denotes the
label set. Client \(k\) stores \(N_k\) examples, including \(n_{k,m}\) examples
of class \(m\). The pooled training population contains
\(N_0=\sum_kN_k\) examples. Clients report their class counts and calibrated
delay profiles to the server. Raw examples remain local.

\subsection{One Synchronous PSL Update}

The network is split into client models \(f_c(\cdot,w_c)\) and a
server model \(f_s(\cdot,w_s)\). Each client stores a synchronized replica
of the client model. At step \(t\), client \(k\) receives a local batch of size
\(b_k^{(t)}\). The active clients are those assigned at least one example,
\begin{equation}
  \mathcal A_t=\{k\in\mathcal K:b_k^{(t)}>0\}.
  \label{eq:active_set}
\end{equation}
They compute smashed activations and send them with their labels to the server.
The server forms one pooled batch of size
\(B_t=\sum_{k\in\mathcal A_t}b_k^{(t)}\), evaluates the mean loss, and
backpropagates through the server model.

The server returns the cut-layer gradients, and each active client completes
its local backward pass. The server then averages the client-model gradients
for that pooled step and applies the same update to every client replica. The
server model is updated once from the pooled loss. This differs from periodic
federated model averaging because the client gradients are averaged at every
PSL step. LB-GPSL uses the same update rule.

\subsection{Class--Client Allocation}

Let \(r_{k,m}^{(t)}\) be the unused class-\(m\) support at client \(k\)
before step \(t\). That client is eligible for the class while
\(r_{k,m}^{(t)}>0\). The number of eligible clients is
\begin{equation}
  \rho_m^{(t)}=\left|\{k:r_{k,m}^{(t)}>0\}\right|.
  \label{eq:client_redundancy}
\end{equation}
This quantity differs from \(C\), the number of candidate classes assigned to
each client. With balanced candidate supports, each class has approximately
\(KC/M\) candidate providers. The realized redundancy \(\rho_m^{(t)}\) counts
only clients that still hold unused examples of class \(m\), so it can decrease
during an epoch.

LB-GPSL represents a pooled batch by class--client counts. The server chooses
integral counts \(q_{k,m}^{(t)}\) with
\begin{equation}
  0\le q_{k,m}^{(t)}\le r_{k,m}^{(t)}, \quad b_k^{(t)}=\sum_mq_{k,m}^{(t)}.
  \label{eq:allocation_counts}
\end{equation}
\begin{figure*}[!t]
  \centering
  \includegraphics[width=0.92\textwidth]{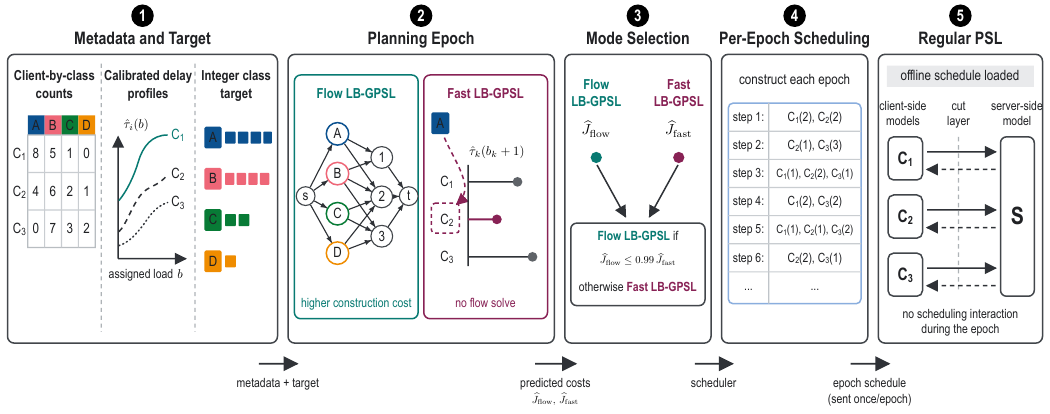}
  \caption{LB-GPSL separates the pooled class target from client assignment.
  During planning, each variant constructs a candidate schedule for one epoch.
  The planning rule compares the two schedules including their construction
  time. The selected variant then constructs a fresh randomized schedule in
  every epoch, and training proceeds with the standard PSL update rule.}
  \label{fig:method_overview}
\end{figure*}

A full pooled batch has \(B\) examples. To include the final partial batch,
the size at step \(t\) is
\begin{equation}
  B_t=\min\!\left\{B,\sum_{k,m}r_{k,m}^{(t)}\right\}.
  \label{eq:partial_batch}
\end{equation}
After assigning that batch, the remaining support becomes
\begin{equation}
  r_{k,m}^{(t+1)}=r_{k,m}^{(t)}-q_{k,m}^{(t)}.
  \label{eq:remaining_update}
\end{equation}

Let \(R_m^{(t)}=\sum_kr_{k,m}^{(t)}\) and
\(R^{(t)}=\sum_mR_m^{(t)}\). The remaining pooled class distribution is
\begin{equation}
  \beta_m^{(t)}=R_m^{(t)}/R^{(t)}.
  \label{eq:remaining_distribution}
\end{equation}
The desired integer class target \(\hat h^{(t)}\) has total mass \(B_t\)
and approximates \(B_t\beta^{(t)}\). An assignment must satisfy
\begin{equation}
  \sum_kq_{k,m}^{(t)}=\hat h_m^{(t)}, \quad m\in\mathcal M.
  \label{eq:exact_target}
\end{equation}
The target deviation and realized class deviation are
\begin{align}
  \Delta_{\mathrm{tgt}}^{(t)}(h)&=\|h/B_t-\beta^{(t)}\|_1,
  \label{eq:target_deviation}\\
  \Delta_{\mathrm{cls}}^{(t)}&=\|B_t^{-1}\sum_kq_{k,:}^{(t)}-\beta^{(t)}\|_1.
  \label{eq:class_deviation}
\end{align}
Realizing \(\hat h^{(t)}\) exactly makes the two quantities equal.

\subsection{Calibrated Client-Side Timing Model}

Let \(\tau_k(b)\) denote the predicted client-side time for client \(k\) with a
local batch of size \(b\). It includes forward computation, activation upload,
gradient download, and backward computation. We use the calibrated model
\begin{equation}
  \tau_k(b)=\mathbf1\{b>0\}(a_k+c_kb^\gamma).
  \label{eq:delay_model}
\end{equation}
The fixed term \(a_k\) captures batch-independent computation and round-trip
costs. The coefficient \(c_k\) combines batch-dependent client computation
and payload transfer, while \(\gamma\) captures the measured batch-size
response. The calibration procedure and held-out validation are reported in
Section~\ref{sec:experimental_setup}.

For an assignment \(q^{(t)}\), the modeled client-side timing objective is
\begin{equation}
  T_{\mathrm{client}}^{(t)}(q)=\max_{k\in\mathcal K}\tau_k\!\left(\sum_mq_{k,m}^{(t)}\right).
  \label{eq:step_time}
\end{equation}
Let the forward--upload and downlink--backward components be
\[
F_k(b)=t_k^{\mathrm{fwd}}(b)+t_k^{\mathrm{up}}(b),
\qquad
D_k(b)=t_k^{\mathrm{down}}(b)+t_k^{\mathrm{bwd}}(b).
\]
Then \mbox{$\tau_k(b)=F_k(b)+D_k(b)$}. Modeling the two synchronization
barriers separately gives
\begin{equation}
  T_{\mathrm{2bar}}^{(t)}=\max_k F_k(b_k^{(t)})+\max_k D_k(b_k^{(t)}).
  \label{eq:two_barrier_time}
\end{equation}
For the same assignment,
\(T_{\mathrm{client}}^{(t)}\le T_{\mathrm{2bar}}^{(t)}\le
2T_{\mathrm{client}}^{(t)}\). The first inequality is an equality exactly when
one client attains both barrier maxima. LB-GPSL optimizes the modeled
client-side objective in~\eqref{eq:step_time}. In the evaluation, we reconstruct
\eqref{eq:two_barrier_time} as a sensitivity check. Server computation is
the same for assignments with the same pooled batch size and is added to
complete modeled-time comparisons.

\subsection{Optimization Problem}

For the current remaining support and target, Flow LB-GPSL solves
\begin{equation}
\begin{aligned}
  \min_{q^{(t)}}\quad &T_{\mathrm{client}}^{(t)}(q)\\
  \mathrm{s.t.}\quad &\sum_kq_{k,m}^{(t)}=\hat h_m^{(t)}, &&m\in\mathcal M,\\
  &0\le q_{k,m}^{(t)}\le r_{k,m}^{(t)}, &&k\in\mathcal K,\ m\in\mathcal M,\\
  &q_{k,m}^{(t)}\in\mathbb Z.&&
\end{aligned}
\label{eq:lb_gpsl_problem}
\end{equation}
The first constraint fixes both the class target and global batch size.
The decision concerns one step under the current remaining support and
calibrated delay model.

\section{Latency Budgeted Parallel Split Learning with Global Sampling}
\label{sec:method}

LB-GPSL first constructs a pooled-batch class target and then assigns its
examples to eligible clients. Flow LB-GPSL minimizes the modeled client-side
timing objective for the current target and support. Fast LB-GPSL assigns each
requested example to the eligible client with the earliest predicted next
completion, reducing schedule-construction cost. Both algorithms use the same
target stream.

\subsection{Integer Target Stream}

Let \(x_m=B_t\beta_m^{(t)}\). Deterministic rounding starts from
\(\lfloor x_m\rfloor\) and assigns the remaining units to the largest
fractional parts. The randomized construction first permutes the classes,
draws \(U\sim\mathrm{Uniform}(0,1)\), and applies systematic rounding:
\begin{equation}
  \hat h_j=\left\lfloor\sum_{i\le j}x_i+U\right\rfloor-\left\lfloor\sum_{i<j}x_i+U\right\rfloor.
  \label{eq:randomized_rounding}
\end{equation}
Mapping back to the original class order, both constructions return
coordinates in \(\{\lfloor x_m\rfloor,\lceil x_m\rceil\}\). Since
\(B_t\le R^{(t)}\) and \(R_m^{(t)}\) is integral,
\(\lceil x_m\rceil\le R_m^{(t)}\). The target therefore never requests more
examples of a class than remain in the pooled support. Fixing the class
permutation and offset at every step fixes the target stream independently of
the assignment algorithm.

\subsection{Flow Assignment}

For a candidate budget \(L\), client \(k\) can receive at most
\begin{equation}
  u_k(L)=\max_{\substack{b\in\mathbb Z_{\ge0}\\
                        b\le R_k^{(t)},\,\tau_k(b)\le L}} b,
  \qquad R_k^{(t)}=\sum_m r_{k,m}^{(t)}.
  \label{eq:capacity}
\end{equation}
Only attainable positive completion times can change this capacity:
\begin{equation}
  \mathcal L^{(t)}=\{a_k+c_kb^\gamma:k\in\mathcal K,\ 1\le b\le\min(B_t,R_k^{(t)})\}.
  \label{eq:candidate_set}
\end{equation}
We remove duplicates and sort the candidates.

At a fixed budget, a source--class--client--sink network tests feasibility.
The source-to-class edge for class \(m\) has capacity \(\hat h_m^{(t)}\). The
class-to-client edge \((m,k)\) has capacity \(r_{k,m}^{(t)}\). The final edge
for client \(k\) has capacity \(u_k(L)\). A flow of value \(B_t\) is exactly a
feasible assignment. Feasibility is monotone in \(L\), so binary search finds
the smallest feasible candidate \(L^\star\).

Several assignments can attain \(L^\star\). To choose among them, the tie-break
favors clients that have contributed less than their proportional share of the
epoch so far. After \(S^{(t)}\) examples have been scheduled, the depletion
deficit is
\begin{equation}
  d_k^{(t)}=\max\{N_kS^{(t)}/N_0-(N_k-R_k^{(t)}),0\}.
  \label{eq:depletion_deficit}
\end{equation}
A minimum-cost feasible flow assigns \(-d_k^{(t)}\) to client \(k\)'s final
edge, resolving equal-budget assignments without changing the target or
budget. Integral capacities give integral counts.

\subsection{Fast Assignment}

Fast LB-GPSL expands the same target into a randomized class sequence. For
each requested class \(m\), it considers clients with unused class-\(m\)
support and chooses the client minimizing
\begin{equation}
  \tau_k(b_k^{(t)}+1).
  \label{eq:fast_next_completion}
\end{equation}
Exact next-completion ties favor the client that has contributed less relative
to its pooled share so far. Remaining ties are broken by client index. Fast LB-GPSL preserves the target and epoch sample set
without candidate generation, binary search, or network flow.

\subsection{Planning-Based Algorithm Selection}

Before training, each algorithm constructs a candidate schedule for one epoch
from the same counts, profile, target stream, and seed. For
\(a\in\{\mathrm{flow},\mathrm{fast}\}\), let
\begin{equation}
  \widehat J_a=S_a^{(0)}+
  \sum_t\max_k\tau_k\!\left(b_{k,t}^{(a,0)}\right),
  \label{eq:mode_selection_cost}
\end{equation}
where \(S_a^{(0)}\) is the measured time to construct the planning schedule. The
rule selects Flow LB-GPSL when
\begin{equation}
  \widehat J_{\mathrm{flow}}\le(1-\varepsilon)\widehat J_{\mathrm{fast}},
  \qquad \varepsilon=0.01,
  \label{eq:mode_selection_rule}
\end{equation}
and selects Fast LB-GPSL otherwise. The 1\% margin selects Flow only when its predicted cost is at least 1\%
lower. Selection depends only on the two predicted schedule costs, which are
available after both planning schedules have been constructed.

When this rule is used for training, the selected planning schedule is used for
the first epoch. The selected algorithm constructs a new randomized schedule
for each later epoch, and the unused planning schedule is discarded. For the
schedule-only comparison, let
\(C_a\) be the measured cost of one complete epoch schedule under the realized
delay profile, including its construction. For an \(E\)-epoch accounting
window, we multiply this measured epoch cost by \(E\) and add the one-time
construction cost of the unselected planning schedule:
\begin{equation}
  C_{\mathrm{sel}}(E)=E C_{a^\star}+S_{\bar a}^{(0)},
  \label{eq:mode_selection_paid_cost}
\end{equation}
where \(\bar a\) is the unselected algorithm. The 100- and 200-epoch comparisons
use the same measured schedule costs with different accounting windows.

\subsection{Offline Epoch Construction}

Algorithm~\ref{alg:lb_gpsl} builds the epoch schedule step by step at the
server while updating the remaining class counts. At the start of each epoch,
the class permutation, systematic-rounding offset, and allocation order are
redrawn. Once the schedule is complete, the server sends it to the clients.
At each scheduled step, clients draw unused examples from the requested
classes, so the class counts are fixed by the schedule but the selected sample
identities remain random. Schedule construction finishes before client
communication begins for the epoch.

\begin{algorithm}[!t]
\caption{Offline construction of one LB-GPSL epoch}
\label{alg:lb_gpsl}
\begin{algorithmic}[1]
\Require Counts \(n\), batch size \(B\), delays \(\tau\), selected mode
\State \(r\gets n\), \(S\gets0\), schedule \(\mathcal Q\gets[\,]\)
\While{\(\sum_{k,m}r_{k,m}>0\)}
  \State \(B_t\gets\min\{B,\sum_{k,m}r_{k,m}\}\)
  \State \(\hat h\gets\Call{SystematicTarget}{r,B_t}\)
  \If{Flow LB-GPSL is selected}
    \State \(L^\star\gets\Call{FirstFeasibleBudget}{r,\hat h,\tau}\)
    \State \(q\gets\Call{MinCostFlow}{r,\hat h,L^\star,S}\)
  \ElsIf{Fast LB-GPSL is selected}
    \State \(q\gets\Call{NextCompletion}{r,\hat h,S,\tau}\)
  \EndIf
  \State verify \(q\) against \(r\) and \(\hat h\)
  \State append \(q\) to \(\mathcal Q\), \(r\gets r-q\), \(S\gets S+B_t\)
\EndWhile
\State \Return \(\mathcal Q\)
\end{algorithmic}
\end{algorithm}

\section{Analysis}
\label{sec:analysis}

We analyze three aspects separately: the class target, the current-step client
assignment, and its effect on the update. Full proofs and counterexamples are in
Supplement Sec.~I.

\subsection{Target and Feasibility Guarantees}

\begin{proposition}[Target accuracy]
\label{prop:target_accuracy}
Both target constructions produce an integer histogram \(\hat h^{(t)}\) of
mass \(B_t\) satisfying
\begin{equation}
  |\hat h_m^{(t)}-B_t\beta_m^{(t)}|<1, \quad
  \Delta_{\mathrm{tgt}}^{(t)}(\hat h^{(t)})<M/B_t.
  \label{eq:target_bound}
\end{equation}
Randomized systematic rounding also satisfies
\begin{equation}
  \mathbb E[\hat h_m^{(t)}\mid\beta^{(t)}]=B_t\beta_m^{(t)}.
  \label{eq:target_unbiased}
\end{equation}
\end{proposition}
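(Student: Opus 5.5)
We treat the two constructions separately for the mass and coordinate bounds, derive the deviation bound from the coordinate bound, and finish with unbiasedness of systematic rounding. Throughout, write \(x_m=B_t\beta_m^{(t)}\), so \(\sum_m x_m=B_t\), which is an integer.

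\emph{Deterministic rounding (largest remainders).} Let \(f_m=x_m-\lfloor x_m\rfloor\in[0,1)\). Then \(D=B_t-\sum_m\lfloor x_m\rfloor=\sum_m f_m\) is an integer with \(0\le D\le M-1\), because each \(f_m<1\) and the sum is integral. Adding one unit to the \(D\) coordinates with the largest fractional parts gives mass exactly \(B_t\). Each coordinate ends up in \(\{\lfloor x_m\rfloor,\lfloor x_m\rfloor+1\}\). A coordinate with \(f_m=0\) never needs the extra unit: if it received one, then \(D\) would exceed the number of positive \(f_m\), which is impossible since \(D=\sum f_m\) is strictly smaller than the count of positive terms unless that count is zero. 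Hence every coordinate lies in \(\{\lfloor x_m\rfloor,\lceil x_m\rceil\}\), and \(|\hat h_m-x_m|<1\). I expect this tie-at-integer point to be the only delicate step. If needed, I would simply assume the tie-break orders by fractional part, so that zero-remainder classes come last.

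\emph{Systematic rounding.} Work in the permuted order and set \(S_j=\sum_{i\le j}x_i\), with \(S_0=0\) and \(S_M=B_t\). The sum defining \(\hat h_j\) telescopes to \(\lfloor B_t+U\rfloor-\lfloor U\rfloor=B_t\), since \(B_t\) is an integer and \(U\in(0,1)\). For each \(j\), \(\hat h_j=\lfloor S_{j-1}+x_j+U\rfloor-\lfloor S_{j-1}+U\rfloor\) counts the integers in the half-open interval \((S_{j-1}+U,\,S_j+U]\), which has length \(x_j\). Such an interval contains either \(\lfloor x_j\rfloor\) or \(\lceil x_j\rceil\) integers, so again \(|\hat h_j-x_j|<1\), with equality \(\hat h_j=x_j\) when \(x_j\) is an integer. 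Mapping back through the permutation preserves all of these statements.

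\emph{Deviation bound.} Since \(|\hat h_m-x_m|<1\) for every \(m\), dividing by \(B_t\) and summing over the \(M\) classes gives
\[
\Delta_{\mathrm{tgt}}^{(t)}(\hat h^{(t)})=\sum_m|\hat h_m/B_t-\beta_m^{(t)}|<M/B_t .
\]

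\emph{Unbiasedness.} Condition on \(\beta^{(t)}\) and on the permutation. For any real \(y\) and \(U\sim\mathrm{Uniform}(0,1)\), we have \(\mathbb E\lfloor y+U\rfloor=y\). To see this, write \(y=\lfloor y\rfloor+\phi\); then \(\lfloor y+U\rfloor=\lfloor y\rfloor+\mathbf 1\{U\ge1-\phi\}\), whose expectation is \(\lfloor y\rfloor+\phi=y\). Applying this identity to both floor terms in the definition of \(\hat h_j\) gives \(\mathbb E[\hat h_j]=S_j-S_{j-1}=x_j\). Averaging over the independent permutation keeps the expectation equal to \(x_m=B_t\beta_m^{(t)}\), which proves \eqref{eq:target_unbiased}.
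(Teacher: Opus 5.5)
Your proof is correct and follows the paper's own route. Each coordinate is shown to be the floor or ceiling of \(B_t\beta_m^{(t)}\), which gives both bounds, and unbiasedness follows because a uniformly shifted interval contains on average as many integers as its length; your identity \(\mathbb E\lfloor y+U\rfloor=y\), applied to both floor terms, is exactly the paper's grid-crossing argument, and you also spell out the telescoping mass computation and the zero-remainder case of largest-remainder rounding that the sketch leaves implicit.
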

\begin{proof}[Proof sketch]
Each coordinate is the floor or ceiling of its real-valued target. Under a
uniform offset, the expected number of systematic grid crossings equals the
interval length \(B_t\beta_m^{(t)}\).
\end{proof}

\begin{proposition}[Invariance of the target stream]
\label{prop:target_stream}
Fix the initial pooled class counts and every random variable used to construct
the integer targets. Any two feasible realization policies then generate the
same complete target stream.
\end{proposition}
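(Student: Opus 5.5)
The plan is induction on the step index \(t\). The key observation is that the target at step \(t\) is a deterministic function of the pooled class counts \(R^{(t)}=(R_m^{(t)})_{m\in\mathcal M}\) together with the fixed randomness, and does not depend on how those counts are split across clients. Concretely, \(B_t=\min\{B,\sum_m R_m^{(t)}\}\) by~\eqref{eq:partial_batch}, \(\beta^{(t)}\) depends only on \(R^{(t)}\) by~\eqref{eq:remaining_distribution}, and both the deterministic largest-remainder rounding and the systematic rounding~\eqref{eq:randomized_rounding} are functions of \(x=B_t\beta^{(t)}\), the class permutation, and the offset \(U\). Once the permutation and offset at step \(t\) are fixed (by hypothesis), \(\hat h^{(t)}=\Phi_t(R^{(t)})\) for a map \(\Phi_t\) that is common to every realization policy. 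Any tie-breaking inside the deterministic rounding, such as equal fractional parts, must also be fixed by a rule depending only on \(x\) and the fixed randomness; I would state this explicitly as part of what ``fixing every random variable'' means.

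Next I would show that the pooled counts evolve identically under any feasible policy. Feasibility means the assignment satisfies~\eqref{eq:allocation_counts} and~\eqref{eq:exact_target}. Summing the update~\eqref{eq:remaining_update} over \(k\) and using~\eqref{eq:exact_target} gives
\[
R_m^{(t+1)}=\sum_k\bigl(r_{k,m}^{(t)}-q_{k,m}^{(t)}\bigr)=R_m^{(t)}-\hat h_m^{(t)}.
\]
So \(R^{(t+1)}=R^{(t)}-\Phi_t(R^{(t)})\), which does not involve the client split \(q^{(t)}\).

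The induction is then as follows. The base case is \(R^{(0)}_m=\sum_k n_{k,m}\), which is the same for both policies. If two policies share \(R^{(t)}\), they share \(\hat h^{(t)}\) and therefore \(R^{(t+1)}\). The loop in Algorithm~\ref{alg:lb_gpsl} terminates when \(\sum_m R_m^{(t)}=0\), so both policies also run for the same number of steps and produce streams of equal length.

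The main subtlety is that every step must remain feasible, so that ``feasible realization policy'' makes sense at each \(t\). The argument after~\eqref{eq:randomized_rounding} shows \(\hat h_m^{(t)}\le\lceil x_m\rceil\le R_m^{(t)}\). A feasible \(q\) exists because the class-to-client constraints decouple: for each class we only need \(\hat h_m^{(t)}\le\sum_k r_{k,m}^{(t)}\). I expect this to be the only step needing care, along with making explicit that the randomness is indexed by step and not drawn adaptively from assignment-dependent quantities.
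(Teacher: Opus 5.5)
Your proof is correct and follows the paper's argument: every feasible realization subtracts exactly \(\hat h^{(t)}\) from the pooled counts, so the pooled state, and hence the next target, does not depend on which clients supply the examples, and induction on \(t\) fixes the entire stream. The paper's sketch leaves implicit several points you make explicit: that a feasible assignment exists at every step, that both streams have the same length, and that the target randomness is indexed by step rather than drawn adaptively.
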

\begin{proof}[Proof sketch]
Every feasible realization removes exactly \(\hat h^{(t)}\) from the pooled
counts. The next pooled state is therefore independent of provider identity.
Induction fixes every later pooled state and target. Client-level remaining
states can still differ.
\end{proof}

\begin{proposition}[Flow feasibility and integrality]
\label{prop:flow_feasibility}
For a fixed budget \(L\), target \(\hat h^{(t)}\) is feasible if and only if
the source--class--client--sink network has maximum-flow value \(B_t\).
Equivalently, every \(S\subseteq\mathcal M\) satisfies
\begin{equation}
  \sum_{m\in S}\hat h_m^{(t)}\le
  \sum_{k\in\mathcal K}\min\!\left\{u_k(L),
  \sum_{m\in S}r_{k,m}^{(t)}\right\}.
  \label{eq:hall_condition}
\end{equation}
Whenever feasible, the network and depletion tie-break admit an integral
assignment.
\end{proposition}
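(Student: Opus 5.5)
I would prove Proposition~\ref{prop:flow_feasibility} in three steps: a bijection between budget-feasible assignments and integral flows of value \(B_t\); a max-flow/min-cut (Hall-type) argument for the cut condition~\eqref{eq:hall_condition}; and integrality of network flows for the last claim.

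\emph{Step 1: assignments and flows correspond.}
Call an assignment \(q\) feasible for budget \(L\) if it satisfies the constraints of~\eqref{eq:lb_gpsl_problem} and \(\tau_k(\sum_m q_{k,m})\le L\) for every \(k\).
\begin{itemize}
  \item Map \(q\) to the flow that sends \(\hat h_m^{(t)}\) on edge \((s,m)\), \(q_{k,m}\) on edge \((m,k)\), and \(b_k=\sum_m q_{k,m}\) on edge \((k,\text{sink})\).
  \item The capacity \(r_{k,m}^{(t)}\) on \((m,k)\) is the support constraint.
  \item The bound \(b_k\le u_k(L)\) follows from~\eqref{eq:capacity}, since \(b_k\le R_k^{(t)}\) and \(\tau_k(b_k)\le L\).
  \item Conservation holds by construction, and the flow value is \(\sum_m\hat h_m^{(t)}=B_t\).
\end{itemize}
Conversely, take a flow of value \(B_t\). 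It must saturate every source edge, because those capacities sum to exactly \(B_t\). Reading off \(q_{k,m}\) as the flow on \((m,k)\) then gives~\eqref{eq:exact_target} and the support bounds, with \(b_k\le u_k(L)\).

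The one point needing care is that \(b\le u_k(L)\) must imply \(\tau_k(b)\le L\), i.e.\ that feasible batch sizes form a down-set. This holds because \(\tau_k\) is nondecreasing on \(\mathbb Z_{\ge0}\): we have \(\tau_k(0)=0\) and \(a_k+c_kb^\gamma\) is increasing for \(c_k,\gamma>0\). If \(b=0\) there is nothing to check. Since the maximum flow never exceeds \(B_t\), feasibility is equivalent to the maximum-flow value equalling \(B_t\).

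\emph{Step 2: the cut condition.}
By max-flow/min-cut, the maximum flow equals \(B_t\) iff every \(s\)--sink cut has capacity at least \(B_t\). A cut is specified by a set \(S\subseteq\mathcal M\) of classes on the source side and a set \(T\subseteq\mathcal K\) of clients on the source side. Its capacity is
\[
\sum_{m\notin S}\hat h_m^{(t)}+\sum_{k\in T}u_k(L)+\sum_{k\notin T}\sum_{m\in S}r_{k,m}^{(t)}.
\]
For fixed \(S\), we minimize over \(T\) client by client. This replaces each client's contribution by \(\min\{u_k(L),\sum_{m\in S}r_{k,m}^{(t)}\}\). Requiring the result to be at least \(B_t=\sum_m\hat h_m^{(t)}\) and cancelling \(\sum_{m\notin S}\hat h_m^{(t)}\) gives exactly~\eqref{eq:hall_condition}. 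This is the step I expect to need the most care: one must check that these cuts are the only relevant ones, and handle the degenerate cases \(S=\emptyset\) and \(S=\mathcal M\).

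\emph{Step 3: integrality.}
All capacities \(\hat h_m^{(t)}\), \(r_{k,m}^{(t)}\), and \(u_k(L)\) are integers. By the integral flow theorem, augmenting-path methods therefore return an integral maximum flow. For the depletion tie-break, fix the flow value at \(B_t\) and use costs \(-d_k^{(t)}\) on the final edges. The constraint matrix is the incidence matrix of a directed network, hence totally unimodular. So the min-cost flow LP has an integral optimal vertex whenever it is feasible, and real-valued costs do not affect this. Step 1 then converts that flow back into an integral assignment.
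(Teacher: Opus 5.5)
Your proposal is correct and follows the same route as the paper's sketch: an assignment--flow correspondence, max-flow/min-cut for the subset condition, and integrality of maximum and minimum-cost flows under integral capacities. Your explicit cut computation and your down-set observation (that \(b\le u_k(L)\) must imply \(\tau_k(b)\le L\), which needs monotonicity of \(\tau_k\)) spell out points the paper's sketch leaves implicit. The one dependency to state explicitly is that reading an integral \(q\) off a value-\(B_t\) flow in Step~1 requires an integral maximum flow, which your Step~3 supplies.
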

\begin{proof}[Proof sketch]
A feasible assignment routes each class demand through eligible clients and
their budget capacities. The reverse mapping converts any value-\(B_t\) flow
into an assignment. The subset condition follows from max-flow/min-cut, and integral capacities
imply integral maximum and minimum-cost flows~\cite{ahujaNetworkFlows1993}.
\end{proof}

\subsection{Current-Step Assignment Guarantees}

\begin{proposition}[Current-step client-side optimality]
\label{prop:min_latency}
Fix \(r^{(t)}\), \(\tau\), and \(\hat h^{(t)}\). Flow LB-GPSL
solves~\eqref{eq:lb_gpsl_problem}. No feasible assignment for the same current state
and target has lower \(T_{\mathrm{client}}^{(t)}\).
\end{proposition}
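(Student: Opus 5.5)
The plan is to reduce optimality to two facts. First, for each budget \(L\), the network test in Proposition~\ref{prop:flow_feasibility} is an exact feasibility oracle for the constraint \(T_{\mathrm{client}}^{(t)}(q)\le L\). Second, the optimal value of~\eqref{eq:lb_gpsl_problem} lies in the finite candidate set \(\mathcal L^{(t)}\), or is zero in a degenerate case. Binary search over \(\mathcal L^{(t)}\), together with monotonicity, then returns the optimum.

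\emph{Step 1: budgeted feasibility is exactly per-client capacity.} Fix \(L\ge0\). For an integral \(q\) satisfying the constraints of~\eqref{eq:lb_gpsl_problem}, I would show that \(T_{\mathrm{client}}^{(t)}(q)\le L\) holds if and only if \(b_k=\sum_m q_{k,m}^{(t)}\le u_k(L)\) for every \(k\). The forward direction is direct: \(b_k\) is an integer, \(b_k\le R_k^{(t)}\) because \(q\le r\), and \(\tau_k(b_k)\le L\), so \(b_k\) is admissible in the maximization~\eqref{eq:capacity}. For the converse I need \(\tau_k\) to be nondecreasing in \(b\), which holds since \(a_k,c_k\ge0\) and \(\gamma>0\) are implicit in the calibrated model. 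Then every \(b\le u_k(L)\) satisfies \(\tau_k(b)\le\tau_k(u_k(L))\le L\). Note that \(b=0\) is always admissible because \(\tau_k(0)=0\), so \(u_k(L)\) is well defined. Combining this with Proposition~\ref{prop:flow_feasibility} shows that a feasible \(q\) with objective at most \(L\) exists if and only if the network at budget \(L\) has max-flow \(B_t\). Since \(u_k(L)\) is nondecreasing in \(L\), feasibility is monotone.

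\emph{Step 2: the optimum is attained in \(\mathcal L^{(t)}\).} The feasible set of~\eqref{eq:lb_gpsl_problem} is finite and nonempty. It is nonempty because \(\hat h_m^{(t)}\le R_m^{(t)}\) by the integer-target argument, so the network with \(u_k=R_k^{(t)}\) is feasible. Hence an optimizer \(q^\star\) exists with value \(T^\star=\tau_{k^\star}(b_{k^\star})\) for some \(k^\star\). If \(B_t>0\), some \(b_k\ge1\), so \(T^\star=a_{k^\star}+c_{k^\star}b_{k^\star}^\gamma\) with \(1\le b_{k^\star}\le\min(B_t,R_{k^\star}^{(t)})\). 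Therefore \(T^\star\in\mathcal L^{(t)}\).

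\emph{Step 3: binary search returns \(L^\star=T^\star\).} By Step 1, no \(L<T^\star\) is feasible and \(T^\star\) itself is feasible. By monotonicity, the smallest feasible element of the sorted set \(\mathcal L^{(t)}\) is exactly \(T^\star\), and binary search finds it. The min-cost flow at \(L^\star\) then has value \(B_t\) and integral counts by Proposition~\ref{prop:flow_feasibility}. Its depletion tie-break alters only edge costs, not capacities, so by Step 1 the returned \(q\) satisfies \(T_{\mathrm{client}}^{(t)}(q)\le L^\star=T^\star\). This means \(q\) is optimal, and no feasible assignment does better.

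The main obstacle is Step 1's converse, which rests on monotonicity of \(\tau_k\). With a nonmonotone profile, \(u_k(L)\) could exceed some \(b\) for which \(\tau_k(b)>L\). I would state the sign assumptions \(a_k,c_k\ge0\) and \(\gamma>0\) explicitly. I would also handle the edge case \(B_t=0\) separately: there \(\hat h^{(t)}=0\), the objective is \(0\), and the zero assignment is trivially optimal.
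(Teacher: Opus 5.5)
Your proposal is correct and follows essentially the same route as the paper's sketch. Both arguments rest on four points: $u_k(L)$, and hence flow feasibility, is monotone in $L$; the optimal value lies in $\mathcal L^{(t)}$; binary search returns the smallest feasible candidate; and the depletion tie-break changes only edge costs, so the returned flow stays within $L^\star$. You also state explicitly that $\tau_k$ must be nondecreasing (via $a_k,c_k\ge0$, $\gamma>0$), which is needed both for $b_k\le u_k(L)\Rightarrow\tau_k(b_k)\le L$ and for the maximum in Step 2 to be attained at an active client; the paper assumes this only for the Fast variant and leaves it implicit here.
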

\begin{proof}[Proof sketch]
The capacities \(u_k(L)\) and flow feasibility are monotone in \(L\). Every
nonempty integral assignment attains a value in \(\mathcal L^{(t)}\). The
search therefore finds the smallest feasible completion time, and the tie-break
stays within that budget.
\end{proof}

The optimum is stepwise: each assignment changes later client--class support, so
the current-step optimum and the full-epoch optimum are different problems.

\begin{proposition}[A sufficient condition for Fast LB-GPSL current-step optimality]
\label{prop:fast_exactness}
For one target, suppose every requested class has the same eligible client set
and no class-specific support count can bind within that target. If every
\(\tau_k(b)\) is nondecreasing, Fast LB-GPSL attains the minimum current-step modeled
client-side time.
\end{proposition}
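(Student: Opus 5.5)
Under the two hypotheses the class labels become irrelevant, and the plan is to reduce the problem to distributing \(B_t\) identical units among the common eligible set \(\mathcal E\) so as to minimize \(\max_k\tau_k(b_k)\). Concretely, I read ``no class-specific support count can bind'' as \(r_{k,m}^{(t)}\ge \hat h_m^{(t)}\) for every \(k\in\mathcal E\) and every requested \(m\). Then any vector \((b_k)_{k\in\mathcal E}\) of nonnegative integers with \(\sum_k b_k=B_t\) and \(b_k\le R_k^{(t)}\) can be realized by some \(q^{(t)}\) meeting \eqref{eq:exact_target}. To see this, split the target class by class, for example by filling clients greedily in class order; the per-class counts never exceed \(\hat h_m^{(t)}\le r_{k,m}^{(t)}\). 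Conversely, every feasible assignment to~\eqref{eq:lb_gpsl_problem} induces such a vector with \(b_k=0\) outside \(\mathcal E\). So the problem becomes \(\min\max_{k\in\mathcal E}\tau_k(b_k)\) subject to \(\sum_k b_k=B_t\). Fast LB-GPSL, regardless of the randomized class order, is exactly the greedy rule that adds one unit at a time to the client of \(\mathcal E\) minimizing \(\tau_k(b_k+1)\). The hypotheses also guarantee that every client stays eligible throughout the target.

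Next I show that the greedy rule is optimal for this scalar problem when each \(\tau_k\) is nondecreasing. Let \(L^\star\) be the optimal value, as characterized by Proposition~\ref{prop:min_latency}, and let \(u_k=u_k(L^\star)\) be the capacities from~\eqref{eq:capacity}. Optimality gives \(\sum_{k\in\mathcal E}u_k\ge B_t\). The key invariant to establish by induction over the \(B_t\) insertions is that after each step every client satisfies \(b_k\le u_k\). Suppose this holds before some insertion. Because fewer than \(B_t\) units have been placed, some client \(j\) has \(b_j<u_j\). Monotonicity of \(\tau_j\) and the definition of \(u_j\) then give \(\tau_j(b_j+1)\le\tau_j(u_j)\le L^\star\). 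The greedy choice \(k'\) therefore satisfies \(\tau_{k'}(b_{k'}+1)\le L^\star\), and monotonicity gives \(b_{k'}+1\le u_{k'}\). I will state this last step carefully: \(u_{k'}\) is the largest \(b\le R_{k'}^{(t)}\) with \(\tau_{k'}(b)\le L^\star\), and \(b_{k'}+1\le R_{k'}^{(t)}\) holds because support does not bind. At termination every \(\tau_k(b_k)\le L^\star\), so the greedy value equals the optimum.

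The main obstacle is the first step, namely stating the non-binding hypothesis precisely enough that an arbitrary split of client loads is realizable class by class. A secondary issue is the indicator in \(\tau_k(0)=0\), which makes \(\tau_k\) nondecreasing on \(\mathbb Z_{\ge0}\) but not continuous. Since the argument only uses monotonicity on integers, this causes no difficulty. Ties in~\eqref{eq:fast_next_completion} are harmless because the invariant holds for any minimizer.
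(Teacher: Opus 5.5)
Your proof is correct and follows the paper's route. Under the common-support and non-binding hypotheses, class identity drops out, and the next-completion greedy is then optimal for the scalar min--max problem because each \(\tau_k\) is nondecreasing. The paper argues the second step as a multiway merge that selects the \(B_t\) smallest completion thresholds \(\tau_k(1),\tau_k(2),\ldots\), whereas you use the invariant \(b_k\le u_k(L^\star)\). The two arguments are equivalent, and yours also states explicitly the reading of the non-binding hypothesis, \(r_{k,m}^{(t)}\ge\hat h_m^{(t)}\), and the realizability of arbitrary load vectors, which the paper's sketch leaves implicit.
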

\begin{proof}[Proof sketch]
Class identity then does not constrain the allocation, so each client
contributes its ordered sequence \(\tau_k(1),\ldots,\tau_k(B_t)\). Fast LB-GPSL
performs a multiway merge that selects the \(B_t\) smallest attainable
completion thresholds, whose maximum is the smallest client-side timing budget
with total capacity \(B_t\).
\end{proof}

Having many providers for each class is not sufficient for this condition: the
requested classes must be available on the same set of clients. The supplement
gives a restricted-support example in which Fast LB-GPSL takes twice the
modeled client-side time of Flow LB-GPSL.

\subsection{Mean Gradients and Epoch Conservation}

At a fixed model state \(w\), let \(\bar g(w,q)\) be the mean of the per-example
gradients sampled under assignment \(q\). Assume that
sampling from client \(k\), class \(m\), has the same expected gradient
\(\mu_{k,m}(w)\) under either assignment, and that
\(\|\mu_{k,m}(w)-\mu_m(w)\|\le\eta_m\). Here, \(\eta_m\) bounds how much the
expected gradient for class \(m\) can differ across clients from the reference
mean \(\mu_m(w)\). For assignments \(q,q'\) with the same class target, define
\begin{equation}
  d_m(q,q')=\frac12\sum_k|q_{k,m}-q'_{k,m}|.
  \label{eq:within_class_transport}
\end{equation}

\begin{proposition}[Bounded within-class mean-gradient shift]
\label{prop:expected_update}
Under the stated assumptions,
\begin{equation}
 \left\|\mathbb E\bar g(w,q)-\mathbb E\bar g(w,q')\right\|
 \le \frac{2}{B_t}\sum_m\eta_m d_m(q,q').
 \label{eq:expected_gradient_bound}
\end{equation}
If \(\eta_m=0\) for every class, all feasible realizations of the target have
the same expected mean gradient.
\end{proposition}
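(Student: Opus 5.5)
The plan is to write both expected mean gradients explicitly as count-weighted sums of the class--client means, and then to exploit the fact that both assignments meet the same class target. Under the stated assumption, linearity of expectation gives
\[
\mathbb E\bar g(w,q)=\frac1{B_t}\sum_{k,m}q_{k,m}\,\mu_{k,m}(w),
\]
and the same expression holds for \(q'\). Both assignments realize \(\hat h^{(t)}\), so both have total mass \(B_t\) and the same normalization. We then subtract \(\mu_m(w)\) inside each class. Because \(\sum_k q_{k,m}=\sum_k q'_{k,m}=\hat h_m^{(t)}\), the terms \(\sum_k(q_{k,m}-q'_{k,m})\mu_m(w)\) vanish, which leaves
\[
\mathbb E\bar g(w,q)-\mathbb E\bar g(w,q')=\frac1{B_t}\sum_m\sum_k (q_{k,m}-q'_{k,m})\bigl(\mu_{k,m}(w)-\mu_m(w)\bigr).
\]
This re-centering, which removes the reference class mean, is the key step. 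It relies entirely on the exact-target constraint~\eqref{eq:exact_target}.

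Next, apply the triangle inequality and the bound \(\|\mu_{k,m}-\mu_m\|\le\eta_m\). This gives
\[
\left\|\mathbb E\bar g(w,q)-\mathbb E\bar g(w,q')\right\|\le\frac1{B_t}\sum_m\eta_m\sum_k|q_{k,m}-q'_{k,m}| = \frac{2}{B_t}\sum_m\eta_m d_m(q,q'),
\]
where the last equality is the definition~\eqref{eq:within_class_transport}. The final claim follows directly: if \(\eta_m=0\) for every \(m\), the right-hand side is zero. Hence any two feasible realizations of the same target have equal expected mean gradients.

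This already gives the stated bound, so no further sharpening is needed. One could also pair surplus and deficit within each class, using the fact that the signed differences sum to zero, and obtain the constant \(1\) in place of \(2\) by routing through \(\mu_{k,m}-\mu_{k',m}\). That is unnecessary here.

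The main obstacle is not algebraic. It is justifying the first display. The clients draw unused examples without replacement, so the per-example expectation must be taken conditionally on the current remaining pool. The stated assumption that each \((k,m)\) cell has the same expected gradient \(\mu_{k,m}(w)\) under either assignment is exactly what allows the expectation to be written as a count-weighted sum. It also removes any dependence on which specific examples are drawn. I would state this conditioning explicitly at fixed \(w\) and at the current step, and note that the normalization uses the common \(B_t\) from~\eqref{eq:partial_batch}.
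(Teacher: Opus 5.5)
Your argument is correct and matches the paper's proof sketch. You write both expected means as count-weighted sums of \(\mu_{k,m}(w)\), use the equal class totals \(\hat h_m^{(t)}\) to subtract \(\mu_m(w)\), and apply the triangle inequality to the reassigned mass \(\sum_k|q_{k,m}-q'_{k,m}|=2d_m(q,q')\).

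Drop your aside about improving the constant to \(1\). Routing through \(\mu_{k,m}-\mu_{k',m}\) only gives \(\|\mu_{k,m}-\mu_{k',m}\|\le 2\eta_m\), so the surplus--deficit pairing reproduces the factor \(2\). That factor is tight: take two clients with \(\mu_{k,m}=\mu_m\pm\eta_m e\) for a unit vector \(e\), and let \(q\) and \(q'\) place all of class \(m\) on opposite clients.
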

\begin{proof}[Proof sketch]
Both assignments have total class count \(\hat h_m\). Subtract \(\mu_m\) from
the client-class means and apply the triangle inequality to the reassigned
class mass.
\end{proof}

The bound concerns the mean of per-example gradients at one model state. The
actual training update can also depend on the full batch through server-side
batch normalization and on previous gradients through the optimizer. We
therefore evaluate learning behavior empirically through accuracy and time to
target.

Every complete schedule conserves the local class counts.
Summing~\eqref{eq:remaining_update} over an epoch ending at \(r=0\) gives
\begin{equation}
  \sum_tq_{k,m}^{(t)}=n_{k,m} \quad \forall k,m.
  \label{eq:epoch_conservation}
\end{equation}
Because clients draw unused examples without replacement, every local example
is used exactly once per epoch, including in the final partial batch. LB-GPSL
can change the order in which examples are used and which clients supply each
target, but not the set of examples used in the epoch.

Even with the same epoch sample set, different assignments can produce
different learning trajectories. They can therefore require different numbers
of updates to reach a given accuracy, so a lower modeled cost per update does
not necessarily yield an earlier crossing. For algorithm \(a\), let \(N_a(\alpha)\) be the number of updates
required to reach accuracy \(\alpha\), and let \(\bar c_a(\alpha)\) be the
average modeled cost of those updates. Since the modeled time to the crossing is
\(N_a(\alpha)\bar c_a(\alpha)\), LB-GPSL reaches \(\alpha\) earlier than GPSL
exactly when
\begin{equation}
  \frac{N_{\mathrm{LB}}(\alpha)}{N_{\mathrm{GPSL}}(\alpha)}
  <
  \frac{\bar c_{\mathrm{GPSL}}(\alpha)}
       {\bar c_{\mathrm{LB}}(\alpha)}.
  \label{eq:updates_cost_condition}
\end{equation}
We therefore report time to target together with final accuracy and modeled
training time.

\subsection{Scheduling Cost}

The general-flow network has \(n_g=M+K+2\) vertices and at most
\(m_g=MK+M+K\) arcs. At most \(C_t\le KB_t\) completion candidates are
generated. A conservative general-flow bound is
\begin{equation}
  O(C_t\log C_t+\log C_t\,n_g^2m_g+B_tn_gm_g).
  \label{eq:exact_complexity}
\end{equation}
For \(M\le10\), the implementation tests the \(2^M-1\) nonempty subset
conditions in vectorized form. Larger class spaces use general maximum flow.
Fast LB-GPSL avoids the flow solve. Including the work needed to maintain the
class--client counts, a conservative bound is
\(O\bigl(B_tK(M+\log K)+M\bigr)\).

\section{Experimental Setup}
\label{sec:experimental_setup}

We compare learning quality and modeled time against GPSL. A fixed-target
control isolates provider assignment, and schedule-only comparisons test the
planning rule across class supports and delay profiles.

\subsection{Data and Training Protocol}

All training studies use 50 clients. The training examples of each class are
distributed across clients using a Dirichlet distribution with concentration
0.1~\cite{hsuNonIID2019},
producing non-IID class distributions across clients throughout the training
experiments. For CIFAR-10~\cite{krizhevskyCIFAR2009},
we reserve a stratified 5,000-image validation set. All clients
are candidate providers for all ten classes and the remaining 45,000 training
images are partitioned.

The larger-class studies also vary each client's candidate class support.
CIFAR-100~\cite{krizhevskyCIFAR2009} allows all 100 classes at each client, while
Tiny ImageNet~\cite{leTinyImageNet2015} assigns each client either 20 or 100 of
its 200 classes. We call these conditions TIN-C20 and TIN-C100. Candidate
provider counts are balanced before examples are assigned. Because a candidate
class can receive zero examples at a client, the number of clients that
actually hold a class can be smaller than the candidate support suggests.

CIFAR-100 uses 45,000 training and 5,000 validation images. Tiny ImageNet uses
90,000 training and 10,000 validation images. We train a ResNet-18 on CIFAR-10
for 100 epochs. For CIFAR-100 and Tiny ImageNet, we train a ResNet-34 from
random initialization~\cite{heResNet2016} for 100 and 200 epochs, respectively. The server model begins at the second residual-block group for both
CIFAR datasets and at the third for Tiny ImageNet. Client models use 32-group
normalization~\cite{wuGroupNorm2018} and the server model retains batch
normalization. The global batch size is 128.

Training uses AdamW~\cite{loshchilovAdamW2019} with learning rate \(10^{-3}\),
weight decay \(5\times10^{-4}\), five warm-up epochs, cosine decay to
\(10^{-5}\), and label smoothing 0.1. Augmentation uses padded random crops
and horizontal flips for CIFAR, plus random augmentation with two operations
and magnitude nine for Tiny ImageNet. Validation accuracy is recorded after each epoch, and the final
checkpoint is evaluated once on the designated final split: the official CIFAR
test set or Tiny ImageNet's labeled validation set.

\subsection{Compared Methods and Assignment Control}

GPSL is the latency-unaware baseline. The CIFAR-10 study compares GPSL, Flow
LB-GPSL and Fast LB-GPSL over five matched runs, and the larger-class studies
compare GPSL and Fast LB-GPSL over five matched runs. Within a matched pair,
methods share the data partition, initial model, and prescribed randomness.

The target-matched control uses the same integer target stream as LB-GPSL but
assigns examples without using client delays. It uses the depletion tie-break
for otherwise equivalent assignments. The class targets and epoch sample set
therefore remain fixed, so comparisons with Flow or Fast isolate the effect of
latency-aware provider assignment.

Vanilla PSL and sequential split learning provide contextual comparisons with
three runs each under their native update schedules.

\subsection{Timing Model and Cost Accounting}

Client computation is measured at batch sizes 1, 4, 16, and 64, with 2, 8,
32, and 128 held out for validation. The CIFAR-10 ResNet-18 profile uses the
affine form with \(\gamma=1.0\). The fitted ResNet-34 exponents are
\(\gamma=1.355\) for CIFAR-100 and \(1.153\) for Tiny ImageNet.
Maximum held-out percentage errors for the ResNet-34 profiles are 8.872\% and
10.833\%, respectively.
The supplement reports the ResNet-34 profile coefficients and held-out
validation errors.
We form nominal, log-normal, persistent-straggler, and dynamic-jitter profiles
by perturbing the fixed and batch-dependent computation and communication
terms.

Modeled client-side time is obtained by summing Eq.~\eqref{eq:step_time} over
training steps and includes client forward computation, activation upload,
gradient download, and backward computation. Learning curves also include the
measured time required to construct each epoch schedule. For the larger-class
time-to-target comparisons, we additionally include H200 server computation at
the realized global-batch sizes. Because training constructs a new randomized
schedule for every epoch, schedule-construction time is included for every
epoch. As a sensitivity check, we reconstruct the two PSL synchronization
barriers from the calibrated component fits.

\subsection{Schedule-Selection Study}

The schedule-only study varies class count, client count, and provider
redundancy to examine the balance between modeled client-side time and
schedule-construction time. It includes 46 nominal settings: ten based on the
training configurations and 36 additional combinations. Five perturbed profiles
are added for seven of the training-based settings, giving 81 settings in total
and five seeds per setting (405 comparisons). The supplement lists settings and
perturbations.

For each setting and seed, both algorithms build an epoch schedule from
the same counts, targets, and planning profile. The rule selects an algorithm
using costs predicted under that profile. Both schedules are then evaluated
under realized delays. The one-percent margin was fixed on separate development
schedules. We evaluate 100- and 200-epoch accounting windows by scaling the
measured epoch costs and adding the one-time construction cost of the
unselected planning schedule. The training experiments construct a fresh
randomized schedule every epoch.

\subsection{Evaluation Metrics}

Learning curves plot validation accuracy against cumulative modeled time.
Time to target is the first recorded validation point meeting a prespecified
threshold, without interpolation. Runs that do not reach it have no crossing.

We summarize run-level quantities by mean and sample SD. Within each
matched pair we subtract GPSL from LB-GPSL for accuracy and duration, then
report mean differences and descriptive 95\% \(t\)-intervals. Accuracy
differences use percentage points. For pair \(j\), time-to-target reduction is
\(\bigl(T_{\mathrm{GPSL},j}-T_{\mathrm{LB},j}\bigr)/T_{\mathrm{GPSL},j}\).
Positive values mean an earlier LB-GPSL crossing. Complete-training CIFAR-10
reduction averages paired percentages. Larger-class reductions compare
aggregate mean durations.

For algorithm selection, regret is the percentage by which the selected
schedule exceeds the lower realized cost of the two algorithms, including
selection overhead. Selection accuracy is the fraction of comparisons in which
the rule chooses the lower-cost algorithm.

\section{Results}
\label{sec:results}

\subsection{Flow and Fast Favor Different Operating Regimes}

\begin{table*}[!t]
\caption{Final accuracy and modeled training time over five matched runs}
\label{tab:primary_integrated}
\vspace{4pt}
\centering
\footnotesize
\renewcommand{\arraystretch}{1.10}
\begin{tabular*}{\textwidth}{@{\extracolsep{\fill}}llrrrr@{}}
\toprule
Condition & Method & Acc. (\%) \(\uparrow\) & Time (min) \(\downarrow\) & \(\Delta\)Time vs. GPSL (min) \(\downarrow\) & \(\Delta\)Acc. vs. GPSL (pp) \\
\midrule
\multirow{2}{*}{CIFAR-10}
 & GPSL & \(91.23\pm0.22\) & \(32.35\pm2.25\)
 & \multirow{2}{*}{\shortstack{\(-2.17\)\\\([-3.06,-1.28]\)}}
 & \multirow{2}{*}{\shortstack{\(-0.30\)\\\([-0.58,-0.02]\)}} \\
 & Flow LB-GPSL & \(90.92\pm0.30\) & \(30.18\pm2.38\) & & \\
\midrule
\multirow{2}{*}{CIFAR-100}
 & GPSL & \(74.30\pm0.27\) & \(51.99\pm5.42\)
 & \multirow{2}{*}{\shortstack{\(-6.91\)\\\([-11.12,-2.70]\)}}
 & \multirow{2}{*}{\shortstack{\(-0.59\)\\\([-1.19,+0.01]\)}} \\
 & Fast LB-GPSL & \(73.71\pm0.35\) & \(45.08\pm8.61\) & & \\
\midrule
\multirow{2}{*}{TIN-C20}
 & GPSL & \(62.41\pm0.23\) & \(313.02\pm25.93\)
 & \multirow{2}{*}{\shortstack{\(-52.81\)\\\([-57.04,-48.58]\)}}
 & \multirow{2}{*}{\shortstack{\(+0.17\)\\\([-0.23,+0.58]\)}} \\
 & Fast LB-GPSL & \(62.58\pm0.30\) & \(260.21\pm25.70\) & & \\
\bottomrule
\end{tabular*}
\vspace{1pt}

{\footnotesize Acc. is final-split accuracy. Time includes modeled client-side time and measured schedule construction, excluding server work. Values are means \(\pm\) sample standard deviations. Brackets are descriptive paired 95\% \(t\)-intervals.}
\end{table*}

Flow LB-GPSL reduces CIFAR-10 modeled training time in all five runs
(Table~\ref{tab:primary_integrated}), with a mean paired reduction of 6.75\%
and a 0.30 percentage-point decrease in final accuracy. In the ten-class setting Fast LB-GPSL provides
little time benefit despite its cheaper construction
(Fig.~\ref{fig:learning_evidence}(a)).

\begin{figure*}[!t]
  \centering
  \includegraphics[width=0.92\textwidth]{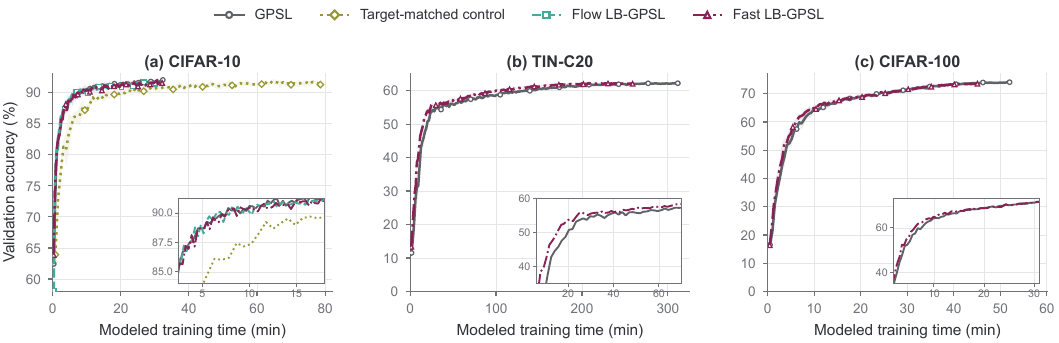}
  \caption{Validation-accuracy learning curves for (a) CIFAR-10,
  (b) TIN-C20, and (c) CIFAR-100. Lines and bands show means and sample
  standard deviations over five runs. The horizontal axis includes
  modeled client-side time plus measured schedule-construction time.
  Panel (a) also includes the target-matched control.}
  \label{fig:learning_evidence}
\end{figure*}

On TIN-C20, Fast LB-GPSL reduces modeled training time by 16.87\% and
reaches 62.58\% final accuracy, compared with
62.41\% for GPSL. On CIFAR-100, it reduces modeled training time by 13.29\%
while final accuracy is 0.59 percentage points lower. The paired intervals
span zero for both conditions, so these results do not establish an accuracy
difference.
An additional TIN-C100 complete-training comparison is reported in Supplement
Sec.~IV.

\subsection{Time-to-Target Gains Vary by Threshold and Class Scale}

The paired time-to-target reductions in Fig.~\ref{fig:threshold_evidence}
vary by accuracy threshold and class scale.
\begin{figure*}[!t]
  \centering
  \includegraphics[width=0.92\textwidth]{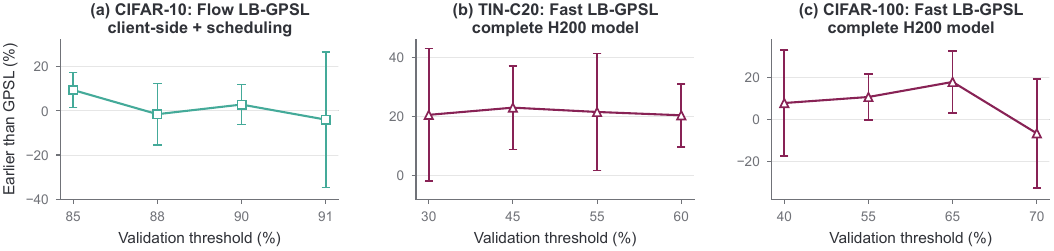}
  \caption{Reduction in modeled time to validation-accuracy targets.
  Positive values indicate an earlier crossing than GPSL.
  Panel (a) includes modeled client-side time plus schedule-construction time.
  Panels (b) and (c) additionally include H200 server computation.
  Markers and bars show paired means and descriptive 95\% \(t\)-intervals
  over five runs.}
  \label{fig:threshold_evidence}
\end{figure*}

Flow LB-GPSL reaches
85\% CIFAR-10 validation accuracy 9.38\% earlier than GPSL, and the later
threshold intervals span zero, so the gain is concentrated at the early target.
Fast LB-GPSL reaches all four TIN-C20 thresholds earlier (20.48--23.06\% after
including server work). On CIFAR-100 it reaches targets through 65\% accuracy
7.93--17.99\% earlier, but reaches 70\% accuracy 6.55\% later. This
reversal shows that lower modeled update cost need not yield an earlier crossing.

\subsection{Assignment Savings Must Offset Construction Cost}

\begin{table}[!t]
\caption{Fixed-target assignment time under nominal profiles}
\label{tab:realization_mechanism}
\vspace{4pt}
\centering
\scriptsize
\renewcommand{\arraystretch}{1.10}
\setlength{\tabcolsep}{3.0pt}
\begin{tabular*}{\columnwidth}{@{\extracolsep{\fill}}lrrrr@{}}
\toprule
Assignment & Time (s) \(\downarrow\) & Build (s) \(\downarrow\) & Build share \(\downarrow\) & Gain \(\uparrow\) \\
\midrule
\multicolumn{5}{c}{CIFAR-100} \\
\midrule
Target-matched control & 100.76 & 4.59 & 4.55 & -- \\
Flow LB-GPSL & 35.99 & 14.03 & 38.97 & 63.91 \\
Fast LB-GPSL & \textbf{22.88} & \textbf{0.57} & \textbf{2.48} & \textbf{77.13} \\
\midrule
\multicolumn{5}{c}{TIN-C20} \\
\midrule
Target-matched control & 76.96 & 8.87 & 11.53 & -- \\
Flow LB-GPSL & 75.94 & 24.63 & 32.43 & 1.22 \\
Fast LB-GPSL & \textbf{60.29} & \textbf{1.08} & \textbf{1.80} & \textbf{21.94} \\
\bottomrule
\end{tabular*}
\vspace{1pt}

{\footnotesize Time is modeled client-side time plus schedule construction, which is listed separately as Build. Build share and gain vs. control are percentages.}
\end{table}

The target-matched comparison holds every class target fixed and varies only the
client assignment (Table~\ref{tab:realization_mechanism}). Flow LB-GPSL
minimizes modeled client-side time at each step, but its schedule is more
expensive to construct. Relative to the control, total modeled epoch time falls by 63.91\%
for Flow and 77.13\% for Fast on CIFAR-100. On TIN-C20, the reductions are
1.22\% and 21.94\%, respectively. Schedule construction accounts for 38.97\%
and 32.43\% of Flow's epoch time in these two settings, compared with 2.48\%
and 1.80\% for Fast.

\subsection{Planning Usually Selects the Lower-Cost Algorithm}

\begin{figure*}[!t]
  \centering
  \includegraphics[width=0.90\textwidth]{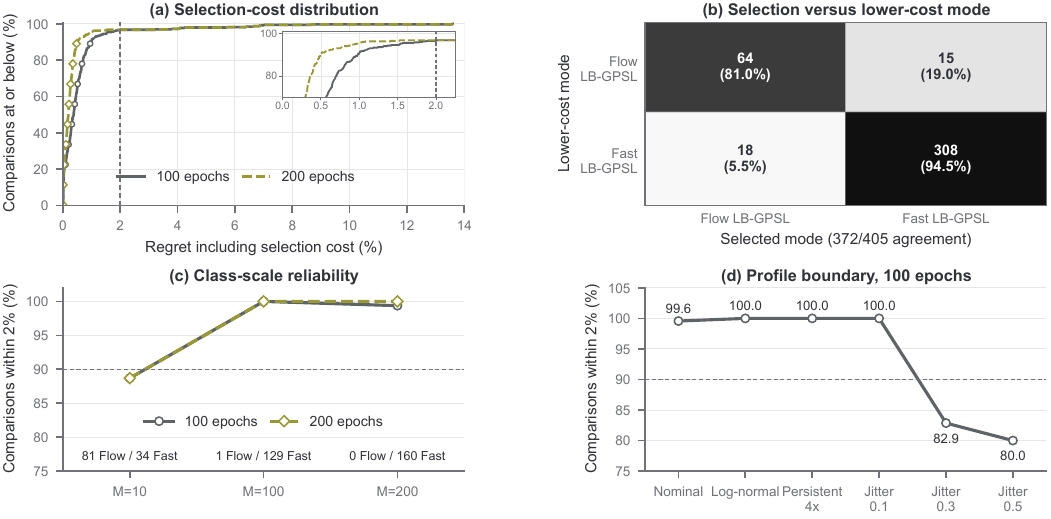}
  \caption{Planning-rule evaluation over 81 settings and five seeds per
  setting. Panel (a) shows the distribution of regret, including selection
  overhead, for 100- and 200-epoch accounting windows.
  Panel (b) compares the selected algorithm with the lower-cost algorithm
  under the realized profile. Panels (c) and (d) show the fraction of
  comparisons within 2\% of that cost by class count and delay profile.
  Flow and Fast denote the two LB-GPSL assignment algorithms.}
  \label{fig:selector_evidence}
\end{figure*}

Over 100 epochs, the planning rule has mean regret 0.596\%, 95th-percentile
regret 1.529\%, and maximum regret 13.588\%
(Fig.~\ref{fig:selector_evidence}(a)). It stays within 2\% of the lower
realized cost in 96.54\% of comparisons and selects the lower-cost algorithm in
91.85\%. The 200-epoch accounting window uses the same decisions and schedules.
Because the unused planning schedule is a one-time cost, its contribution is
smaller over the longer window, reducing mean regret to 0.412\%.

Fast LB-GPSL is selected in 323 of 405 comparisons, including every 200-class
setting. Over 100 epochs, 99.57\% of nominal-profile comparisons and all
log-normal, persistent-straggler, and mild-jitter comparisons are within 2\% of
the lower realized cost. That fraction falls to 82.86\% and 80.00\% for
unmodeled jitter magnitudes 0.3 and 0.5.

\subsection{Contextual Methods Reach Different Accuracy Ranges}

\begin{table}[!t]
\caption{Final accuracy and complete modeled H200 time to validation targets}
\label{tab:context_crossings}
\vspace{4pt}
\centering
\scriptsize
\renewcommand{\arraystretch}{1.10}
\begin{tabular*}{\columnwidth}{@{\extracolsep{\fill}}lrrrrr@{}}
\toprule
\multicolumn{6}{c}{CIFAR-100} \\
\cmidrule(lr){1-6}
Method & Acc. (\%) \(\uparrow\) & \multicolumn{4}{c}{Time to validation target (min) \(\downarrow\)} \\
\cmidrule(lr){3-6}
 & & 40\% & 55\% & 65\% & 70\% \\
\midrule
GPSL & \textbf{74.30} & 3.08 & 6.07 & 12.61 & \textbf{24.69} \\
Fast LB-GPSL & 73.71 & \textbf{2.85} & \textbf{5.42} & \textbf{10.35} & 26.14 \\
PSL & 66.51 & 16.73 & 23.92 & 34.04 & - \\
Sequential SL & 55.40 & 934.00 & 1469.72 & - & - \\
\midrule
\multicolumn{6}{c}{TIN-C20} \\
\cmidrule(lr){1-6}
Method & Acc. (\%) \(\uparrow\) & \multicolumn{4}{c}{Time to validation target (min) \(\downarrow\)} \\
\cmidrule(lr){3-6}
 & & 30\% & 45\% & 55\% & 60\% \\
\midrule
GPSL & 62.41 & 9.39 & 17.44 & 35.10 & 142.97 \\
Fast LB-GPSL & \textbf{62.58} & \textbf{7.47} & \textbf{13.32} & \textbf{27.01} & \textbf{112.86} \\
PSL & 52.34 & 138.58 & 215.44 & - & - \\
Sequential SL & 1.36 & - & - & - & - \\
\bottomrule
\end{tabular*}
\vspace{1pt}

{\footnotesize Times are in minutes and use reaching runs only. A hyphen denotes no crossing. Bold marks the best value under the indicated direction.}
\end{table}

GPSL and Fast LB-GPSL reach all displayed validation targets under their
native schedules (Table~\ref{tab:context_crossings}). Vanilla PSL reaches
only the lower thresholds, while sequential split learning reaches fewer
targets and takes substantially longer on CIFAR-100.
A shorter complete run can therefore reflect lower final accuracy rather
than faster learning.

\Needspace{5\baselineskip}
\section{Discussion and Limitations}
\label{sec:discussion}

For a fixed class target, the reduction in client-side time must be large
enough to offset schedule-construction cost. In the tested larger-class settings, Fast LB-GPSL's lower
construction cost outweighs its extra modeled client-side time. The planning
rule chooses between the algorithms before training.

Clients can hold different examples of the same class, so reassignment can
change the gradients and learning trajectory even when class targets match.
Proposition~\ref{prop:expected_update} bounds the expected mean-gradient shift
for additive per-example gradients at one model state. The lower observed mean
final accuracies on CIFAR-10 and CIFAR-100 show why learning quality must be
assessed alongside modeled time.

The exactness guarantee for Flow LB-GPSL applies to one step: for the current
target and remaining support, it minimizes the modeled client-side objective in
Eq.~\eqref{eq:step_time}. Full-epoch cost and the physical two-barrier PSL step
are evaluated separately. Reconstructing the barriers from calibrated component
fits changes epoch times by less than 0.012\% across the evaluated settings and
leaves the method ordering unchanged. In these settings, the client-side timing closely matches the
two-barrier reconstruction.

The protocol constructs a fresh randomized schedule every epoch. Reusing a
fixed schedule would reduce construction time but repeat the same targets and
providers. Its learning behavior remains untested.

Planning errors grow when realized delays differ from the planning
profile. Online profile updates remain untested. Reported times combine measured schedule construction with calibrated
client delays. They are modeled rather than end-to-end distributed measurements.

Training spans three image datasets, two ResNet architectures, fixed
partitions, and 50 clients. The schedule study weights settings equally, not
by deployment prevalence. Deployment requires class counts and delay
profiles. Privacy-preserving estimation and broader model/network validation
remain open.
\section{Conclusion}
\label{sec:conclusion}

LB-GPSL separates the class composition of a global batch from the clients
that supply it. This allows delay-aware assignment while preserving the
integer target stream, global batch size, and complete epoch sample set.
Flow LB-GPSL minimizes modeled client-side time for each current target.
Fast LB-GPSL builds schedules more cheaply, and a planning rule compares the
variants with construction overhead included.

On CIFAR-10, Flow LB-GPSL reduces modeled training time by 6.75\% with a
0.30 percentage-point decrease in final accuracy. On TIN-C20, Fast LB-GPSL
reduces modeled time by 16.87\% and reaches all four validation targets
earlier. The planning rule usually selects the lower-cost algorithm, with its
largest errors occurring under strong unmodeled jitter.

\section*{Acknowledgment}

Funded by the Deutsche Forschungsgemeinschaft (DFG, German Research Foundation) under Germany’s Excellence Strategy – EXC 3115 – 533767731.

OpenAI ChatGPT and Codex assisted with drafting sections, language editing, and with research code development~\cite{openaiChatGPT2026,openaiCodex2026}.
The authors are responsible for the scientific content and final text.

\bibliographystyle{IEEEtran}
\bibliography{references}

\end{document}